\title{Full-Gradient Representation for \\Neural Network Visualization}
\author{Suraj Srinivas \\Idiap Research Institute \& EPFL
\\ \texttt{suraj.srinivas@idiap.ch}
\And Fran\c{c}ois Fleuret \\ Idiap Research Institute \& EPFL
\\ \texttt{francois.fleuret@idiap.ch} }
\newtheorem{prop}{Proposition}
\newtheorem{example}{Example}
\newtheorem{definition}{Definition}
\newtheorem*{observation}{Observation}
\newcommand{\R}{\mathbb{R}}
\newcommand{\f}[1]{\mathbf{#1}}
\newcommand{\X}{\mathbf{x}}
\newcommand{\B}{\mathbf{b}}
\newcommand{\Z}{\mathbf{z}}
\newcommand{\1}{\mathbf{1}}
\newcommand{\etal}{\emph{et al.}}
\newcommand\restr[2]{{% we make the whole thing an ordinary symbol
  \left.\kern-\nulldelimiterspace % automatically resize the bar with \right
  #1 % the function
  \vphantom{\big|} % pretend it's a little taller at normal size
  \right|_{#2} % this is the delimiter
  }}
\newcommand{\g}[1]{\includegraphics[height = 2cm, width = 2cm]{results/#1}}
\newcommand{\gsupp}[1]{\includegraphics[height = 2cm, width = 2cm]{results_supp/#1}}
\newcommand{\figuretablesupp}[1]{\gsupp{#1} & \gsupp{#1_grad} &  \gsupp{#1_integratedgrad} & \gsupp{#1_smoothgradsq} & \gsupp{#1_gradCAM} &\gsupp{#1_full-gradient}}
\begin{document}

\maketitle

\begin{abstract}

We introduce a new tool for interpreting neural net responses, namely full-gradients, which decomposes the neural net response into input sensitivity and per-neuron sensitivity components. This is the first proposed representation which satisfies two key properties: \textit{completeness} and \textit{weak dependence}, which provably cannot be satisfied by any saliency map-based interpretability method. For convolutional nets, we also propose an approximate saliency map representation, called \textit{FullGrad}, obtained by aggregating the full-gradient components.

We experimentally evaluate the usefulness of \textit{FullGrad} in explaining model behaviour with two quantitative tests: pixel perturbation and remove-and-retrain. Our experiments reveal that our method explains model behavior correctly, and more comprehensively, than other methods in the literature. Visual inspection also reveals that our saliency maps are sharper and more tightly confined to object regions than other methods.
\end{abstract}

\section{Introduction}

This paper studies saliency map representations for the interpretation of neural network functions. Saliency maps assign to each input feature an importance score, which is a measure of the usefulness of that feature for the task performed by the neural network. However, the presence of internal structure among features sometimes makes it difficult to assign a single importance score per feature. For example, input spaces such as that of natural images are compositional in nature. This means that while any single individual pixel in an image may be unimportant on its own, a collection of pixels may be critical if they form an important image region such as an object part. 

For example, a bicycle in an image can still be identified if any single pixel is missing, but if the entire collection of pixels corresponding to a key element, such as a wheel or the drive chain, are missing, then it becomes much more difficult. Here the importance of a part cannot be deduced from the individual importance of its constituent pixels, as each such individual pixel is unimportant on its own. An ideal interpretability method would not just provide importance for each pixel, but also capture that of groups of pixels which have an underlying structure.

This tension also reveals itself in the formal study of saliency maps. While there is no single formal definition of saliency, there are several intuitive characteristics that the community has deemed important \cite{sundararajan2017axiomatic, montavon2017explaining,shrikumar2017learning, lundberg2017unified, kindermans2017reliability, adebayo2018sanity}.  One such characteristic is that an input feature must be considered important if changes to that feature greatly affect the neural network output \cite{kindermans2017reliability, simonyan2013deep}. Another desirable characteristic is that the saliency map must completely explain the neural network output, i.e., the individual feature importance scores must add up to the neural network output \cite{sundararajan2017axiomatic, montavon2017explaining, shrikumar2017learning}. This is done by a redistribution of the numerical output score to individual input features. In this view, a feature is important if it makes a large numerical contribution to the output. Thus we have two distinct notions of feature importance, both of which are intuitive. The first notion of importance assignment is called \textit{local} attribution and second, \textit{global} attribution. It is almost always the case for practical neural networks that these two notions yield methods that consider entirely different sets of features to be important, which is counter-intuitive.

In this paper we propose full-gradients, a representation which assigns importance scores to both the input features and individual feature detectors (or neurons) in a neural network. Input attribution helps capture importance of individual input pixels, while neuron importances capture importance of groups of pixels, accounting for their structure. In addition, full-gradients achieve this by simultaneously satisfying both notions of \textit{local} and \textit{global} importance. To the best of our knowledge, no previous method in literature has this property. 

The overall contributions of our paper are:
\begin{enumerate}
\item We show in \S~\ref{section:impossibility} that \textit{weak dependence} (see Definition~\ref{def:weakdep}), a notion of local importance, and \textit{completeness} (see Definition~\ref{def:completeness}), a notion of global importance, cannot be satisfied simultaneously by any saliency method. This suggests that the counter-intuitive behavior of saliency methods reported in literature \cite{shrikumar2017learning, kindermans2017reliability} is unavoidable.
\item We introduce in \S~\ref{sec:full-grad-repr} the full-gradients which are more expressive than saliency maps, and satisfy both importance notions simultaneously. We also use this to define approximate saliency maps for convolutional nets, dubbed FullGrad, by leveraging strong geometric priors induced by convolutions. 
\item We perform in \S~\ref{sec:experiments} quantitative tests on full-gradient saliency maps including pixel perturbation and remove-and-retrain \cite{hooker2018evaluating}, which show that FullGrad outperforms existing competitive methods.
\end{enumerate}

\section{Related Work}

Within the vast literature on interpretability of neural networks, we shall restrict discussion solely to saliency maps or input attribution methods. First attempts at obtaining saliency maps for modern deep networks involved using input-gradients \cite{simonyan2013deep} and deconvolution \cite{zeiler2014visualizing}. Guided backprop \cite{springenberg2014striving} is another variant obtained by changing the backprop rule for input-gradients to produce cleaner saliency maps. Recent works have also adopted axiomatic approaches to attribution by proposing methods that explicitly satisfy certain intuitive properties. Deep Taylor decomposition \cite{montavon2017explaining}, DeepLIFT \cite{shrikumar2017learning}, Integrated gradients \cite{sundararajan2017axiomatic} and DeepSHAP \cite{lundberg2017unified} adopt this broad approach. Central to all these approaches is the requirement of \textit{completeness} which requires that the saliency map account for the function output in an exact numerical sense. In particular, Lundberg \etal \cite{lundberg2017unified} and Ancona \etal \cite{ancona2018towards} propose unifying frameworks for several of these saliency methods.

However, some recent work also shows the fragility of some of these methods. These include unintuitive properties such as being insensitive to model randomization \cite{adebayo2018sanity}, partly recovering the input \cite{nie2018theoretical} or being insensitive to the model's invariances \cite{kindermans2017reliability}. One possible reason attributed for the presence of such fragilities is evaluation of attribution methods, which are often solely based on visual inspection. As a result, need for quantitative evaluation methods is urgent. Popular quantitative evaluation methods in literature are based on image perturbation \cite{samek2016evaluating, ancona2018towards, montavon2017explaining}. These tests broadly involve removing the most salient pixels in an image, and checking whether they affect the neural network output. However, removing pixels can cause artifacts to appear in images. To compensate for this, RemOve And Retrain (ROAR) \cite{hooker2018evaluating} propose a retraining-based procedure. However, this method too has drawbacks as retraining can cause the model to focus on parts of the input it had previously ignored, thus not explaining the original model. Hence we do not yet have completely rigorous methods for saliency map evaluation.  

Similar to our paper, some works \cite{shrikumar2017learning, kindermans2016investigating} also make the observation that including biases within attributions can enable gradient-based attributions to satisfy the completeness property. However, they do not propose attribution methods based on this observation like we do in this paper.

\section{Local \textit{vs.} Global Attribution} \label{section:impossibility}

In this section, we show that there cannot exist saliency maps that satisfy both notions of \textit{local} and \textit{global} attribution. We do this by drawing attention to a simple fact that $D-$dimensional saliency map cannot summarize even linear models in $\R^D$, as such linear models have $D+1$ parameters. We prove our results by defining a weak notion of local attribution which we call \textit{weak dependence}, and a weak notion of global attribution, called \textit{completeness}. 

Let us consider a neural network function $f : \R^D \rightarrow \R$ with inputs $\X \in \R^D$. A saliency map $S(\X) = \sigma(f, \X) \in \R^D$ is a function of the neural network $f$ and an input $\X$. For linear models of the form $f(\X) = \f{w}^T \X + b$ , it is common to visualize the weights $\f{w}$. For this case, we observe that the saliency map $S(\X) = \f{w}$ is independent of $\X$. Similarly, piecewise-linear models can be thought of as collections of linear models, with each linear model being defined on a different local neighborhood. For such cases, we can define weak dependence as follows. 

\begin{definition} (Weak dependence on inputs) \label{def:weakdep}
Consider a piecewise-linear model 
\[ f(\X) = \begin{cases} 
  \f{w}_0^T \X + b_0 & \X \in \mathcal{U}_0 \\
  ... \\
  \f{w}_n^T \X + b_n & \X \in \mathcal{U}_n \\

\end{cases}
\]
where all $\mathcal{U}_i$ are open connected sets. For this function, the saliency map $S(\X) = \sigma(f,\X)$ restricted to a set $\mathcal{U}_i$ is independent of $\X$, and depends only on the parameters $\f{w}_i, b_i$.
\end{definition}

Hence in this case $S(\X)$ depends weakly on $\X$ by being dependent only on the neighborhood $\mathcal{U}_i$ in which $\X$ resides. This generalizes the notion of \textit{local} importance to piecewise-linear functions. A stronger form of this property, called \textit{input invariance}, was deemed desirable in previous work \cite{kindermans2017reliability}, which required saliency methods to mirror model sensitivity. Methods which satisfy our weak dependence include input-gradients \cite{simonyan2013deep}, guided-backprop \cite{springenberg2014striving} and deconv \cite{zeiler2014visualizing}. Note that our definition of weak dependence also allows for two disconnected sets having the same linear parameters $(\f{w}_i, b_i)$ to have different saliency maps, and hence in that sense is more general than input invariance~\cite{kindermans2017reliability}, which does not allow for this. We now define completeness for a saliency map by generalizing equivalent notions presented in prior work \cite{sundararajan2017axiomatic, montavon2017explaining, shrikumar2017learning}.

\begin{definition} (Completeness)\label{def:completeness}
  A saliency map $S(\X)$ is
  \begin{itemize}
    \item complete if there exists a function $\phi$ such that $\phi(S(\X), \X) = f(\X)$ for all $f, \X$.
    \item complete with a baseline $\X_0$ if there exists a function $\phi$ such that $\phi(S(\X), S_0(\X_0),\X, \X_0) = f(\X) - f(\X_0)$ for all $f, \X, \X_0$, where $S_0(\X_0)$ is the saliency map of $\X_0$.
  \end{itemize} 
	In both cases, we assume $\phi$ is not a constant function of $S(\X)$.
\end{definition}

The intuition here is that if we expect $S(\X)$ to completely encode the computation performed by $f$, then it must be possible to recover $f(\X)$ by using the saliency map $S(\X)$ and input $\X$. Note that the second definition is more general, and in principle subsumes the first. We are now ready to state our impossibility result.

\begin{prop} \label{prop: one}
For any piecewise-linear function $f$, it is impossible to obtain a saliency map $S$ that satisfies both \textit{completeness} and \textit{weak dependence on inputs}, in general.
\end{prop}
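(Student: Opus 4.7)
The plan is to reduce the claim to the single-piece linear case and then derive a parameter-counting obstruction. First I would instantiate the piecewise-linear definition trivially, with one region $\mathcal{U}_0 = \R^D$ and $f(\X) = \f{w}^T \X + b$. Weak dependence then forces $S(\X) \equiv c(\f{w}, b)$ for some map $c : \R^{D+1} \to \R^D$ that is constant in $\X$, while completeness supplies a single universal decoder $\phi$ satisfying $\phi(c(\f{w}, b), \X) = \f{w}^T \X + b$ for every $\f{w}, b$ and every $\X$.

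Next I would show that such a decoder forces $c$ to be injective on the space of linear parameters. Writing the completeness identity at two inputs $\X, \X'$ and subtracting yields $\phi(c, \X) - \phi(c, \X') = \f{w}^T (\X - \X')$; letting $\X - \X'$ sweep a basis of $\R^D$ shows that $c$ alone determines $\f{w}$, and substituting back recovers $b$. Hence any collision $c(\f{w}_1, b_1) = c(\f{w}_2, b_2)$ with $(\f{w}_1, b_1) \neq (\f{w}_2, b_2)$ produces $\f{w}_1^T \X + b_1 = \f{w}_2^T \X + b_2$ for all $\X$, an immediate contradiction.

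The remaining step, and the main obstacle, is producing such a collision: one must argue that no admissible $c : \R^{D+1} \to \R^D$ can be injective. Under the mild implicit regularity that $\sigma$ depends continuously on the linear parameters $(\f{w}, b)$---which every concrete saliency construction in the literature satisfies---$c$ is continuous, and the classical topological fact that a continuous map from $\R^{D+1}$ to $\R^D$ cannot be injective delivers two distinct parameter tuples with the same saliency value. This is the step that cannot be bypassed without some mild structural assumption, since pathological non-measurable bijections $\R^{D+1} \to \R^D$ exist in the absence of any regularity; recognising that the argument only bites under such an assumption is the key technical subtlety.

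For the baseline variant of completeness, the template is identical: I would fix a reference linear $f_0$ at a reference input $\X_0$, observe that $\phi(S(\X), S_0(\X_0), \X, \X_0) = f(\X) - f(\X_0)$ still lets the two-input subtraction trick in the first argument recover $\f{w}$ and then $b$ from $c(\f{w}, b)$, and conclude by the same collision argument. Thus both formulations of completeness fail to coexist with weak dependence on inputs even on single-piece linear functions, \emph{a fortiori} on general piecewise-linear models.
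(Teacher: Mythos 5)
Your argument for the first (baseline-free) form of completeness is essentially the paper's own: restrict the saliency construction to one linear region, observe that weak dependence collapses it to a map $c:(\f{w},b)\mapsto S$ from $\R^{D+1}$ to $\R^D$, find a collision, and note that the decoder $\phi$ would have to distinguish the two affine functions at some input. You are in fact more careful than the paper at the one delicate step: the paper simply asserts that a map $\R^{D+1}\to\R^D$ is ``many-to-one,'' whereas you correctly observe that this needs a regularity hypothesis (continuity plus invariance of domain, say), since set-theoretic bijections $\R^{D+1}\to\R^D$ exist. That is a legitimate refinement, and flagging the needed assumption is worthwhile.

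However, the baseline part of your proposal has a genuine gap. If you reduce to a single linear piece $f(\X)=\f{w}^T\X+b$ on all of $\R^D$, then $f(\X)-f(\X_0)=\f{w}^T(\X-\X_0)$ and the bias cancels identically, so your ``substituting back recovers $b$'' step fails: the decoder never needs to know $b$, hence $c$ need only be injective in $\f{w}$, and $c(\f{w},b)=\f{w}$ together with $\phi(S,S_0,\X,\X_0)=S^T(\X-\X_0)$ satisfies both weak dependence and baseline-completeness. So the impossibility is simply false for single-piece linear functions under the baseline definition, and your \emph{a fortiori} conclusion does not go through. The paper avoids this by placing $\X\in\mathcal{U}_i$ and $\X_0\in\mathcal{U}_j$ in two \emph{different} linear regions, constructing collisions independently on each, and deriving the necessary condition $b_i-b'_i=b_j-b'_j$, which fails in general precisely because the two regions carry independent bias parameters; the paper even lists the single-piece case as one of the corner cases where the condition does hold. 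To repair your proof you need at least two pieces and a collision argument on the joint parameter space $(\f{w}_i,b_i,\f{w}_j,b_j)$.
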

 
The proof is provided in the supplementary material. A natural consequence of this is that methods such as integrated gradients \cite{sundararajan2017axiomatic}, deep Taylor decomposition \cite{montavon2017explaining} and DeepLIFT \cite{shrikumar2017learning} which satisfy completeness do not satisfy weak dependence. For the case of integrated gradients, we provide a simple illustration showing how this can lead to unintuitive attributions. Given a baseline $\X'$, integrated gradients (IG) is given by 
$\mathrm{IG}_i(\X) = (x_i - x'_i) \times \int_{\alpha=0}^{1} \frac{\partial f(\X' + \alpha (\X - \X'))}{\partial x_i} \mathrm{d}\alpha$, where $x_i$ is the $i^{th}$ input co-ordinate. %Here saliency methods are modulated with inputs and thus the input magnitude affects feature importance. To avoid confusion in the example below regarding what we mean by feature importance, we consider points such that input co-ordinates have equal magnitudes.

\begin{example} (Integrated gradients \cite{sundararajan2017axiomatic} can be counter-intuitive)

Consider the piecewise-linear function for inputs $(x_1, x_2) \in \R^2$.
%\vspace{-0.05cm}
\[ f(x_1, x_2) = \begin{cases} 
  x_1 + 3x_2 & x_1, x_2 \leq 1 \\
  3x_1 + x_2 & x_1, x_2 > 1 \\
  0 & otherwise

\end{cases}
\]

Assume baseline $\X' = (0,0)$. Consider three points $(2,2), (4,4), (1.5,1.5)$ , all of which satisfy $x_1, x_2 > 1$ and thus are subject to the same linear function of $f(x_1, x_2) = 3x_1 + x_2$. However, depending on which point we consider, IG yields different relative importances among the input features. E.g: $IG(x_1 = 4,x_2 = 4) = (10,6)$ where it seems that $x_1$ is more important (as $10 > 6$), while for $IG(1.5,1.5) = (2.5, 3.5)$, it seems that $x_2$ is more important. Further, at $IG(2,2) = (4,4)$ both co-ordinates are assigned equal importance. However in all three cases, the output is clearly more sensitive to changes to $x_1$ than it is to $x_2$ as they lie on $f(x_1, x_2) = 3x_1 + x_2$, and thus attributions to $(2,2)$ and $(1.5,1.5)$ are counter-intuitive.
\end{example}

Thus it is clear that two intuitive properties of weak dependence and completeness cannot be satisfied simultaneously. Both are intuitive notions for saliency maps and thus satisfying just one makes the saliency map counter-intuitive by not satisfying the other. Similar counter-intuitive phenomena observed in literature may be unavoidable. For example, Shrikumar \etal~\cite{shrikumar2017learning} show counter-intuitive behavior of local attribution methods by invoking a property similar global attribution, called \textit{saturation sensitivity}. On the other hand, Kindermans \etal~\cite{kindermans2017reliability} show fragility for global attribution methods by appealing to a property similar to local attribution, called \textit{input insensitivity}. 

This paradox occurs primarily because saliency maps are too restrictive, as both weights and biases of a linear model cannot be summarized by a saliency map. While exclusion of the bias term in linear models to visualize only the weights seems harmless, the effect of such exclusion compounds rapidly for neural networks which have bias terms for each neuron. Neural network biases cannot be collapsed to a constant scalar term like in linear models, and hence cannot be excluded. In the next section we shall look at full-gradients, which is a more expressive tool than saliency maps, accounts for bias terms and satisfies both weak dependence and completeness.

\section{Full-Gradient Representation}\label{sec:full-grad-repr}

In this section, we introduce the full-gradient representation, which provides attribution to both inputs and neurons. We proceed by observing the following result for ReLU networks.

\begin{prop}
  Let $f$ be a ReLU neural network without bias parameters, then $f(\X) = \nabla_{\X} f(\X)^T \X$.
  \label{prop:two}
\end{prop}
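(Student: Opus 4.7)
The plan is to exploit the fact that a bias-free ReLU network is positively homogeneous of degree one, and then invoke Euler's identity for degree-one homogeneous functions.

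First I would show $f(\lambda \X) = \lambda f(\X)$ for every $\lambda \geq 0$ by induction on the network depth. A linear layer without bias is the map $\Z \mapsto M \Z$, which trivially commutes with scaling, and the ReLU is itself positively homogeneous: $\max(0, \lambda z) = \lambda \max(0, z)$ for $\lambda \geq 0$. The composition of positively homogeneous maps of the same degree is again positively homogeneous of that degree, so the entire network satisfies the claimed scaling.

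Second I would differentiate both sides of $f(\lambda \X) = \lambda f(\X)$ with respect to $\lambda$ and evaluate at $\lambda = 1$. The left-hand side yields $\nabla_{\X} f(\X)^T \X$ by the chain rule, the right-hand side yields $f(\X)$, and equating them gives the identity.

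An equivalent and perhaps more elementary argument uses the piecewise-linear structure directly. Within each open linear region $\mathcal{U}_i$, the network is a product of weight matrices composed with $\{0,1\}$-valued diagonal masks that record which ReLU units are active; none of these operations introduces an additive constant, so $\restr{f}{\mathcal{U}_i}(\X) = \W_i^T \X$ with no offset and $\nabla_{\X} f = \W_i$. The identity $\nabla_{\X} f(\X)^T \X = \W_i^T \X = f(\X)$ is then immediate on every such region.

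The only technicality, and what I expect to be the main obstacle, is that $\nabla_{\X} f$ fails to exist on the measure-zero set where some preactivation vanishes. The statement should be read as holding at points of differentiability, namely the interiors of the linear regions; this is the usual convention for gradient-based attribution in ReLU networks and suffices for the subsequent development of the full-gradient representation.
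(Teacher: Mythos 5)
Your proof is correct and its main line of argument---positive homogeneity of the bias-free network followed by Euler's identity for degree-one homogeneous functions---is exactly the route the paper takes (the paper phrases the Euler step as a first-order Taylor expansion of $f((1+\epsilon)\X)$, which is the same computation as your derivative at $\lambda=1$). Your additional piecewise-linear argument and the remark about non-differentiability on the boundaries of linear regions are sound refinements that the paper's own proof omits.
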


The proof uses the fact that for such nets, $f(k\X) = k f(\X)$ for any $k > 0$. This can be extended to ReLU neural networks with bias parameters by incorporating additional inputs for biases, which is a standard trick used for the analysis of linear models. For a ReLU network $f(\cdot; \B)$ with bias, let the number of such biases in $f$ be $F$. 

\begin{prop}
Let $f$ be a ReLU neural network with biases $\B \in \R^F$, then
\begin{eqnarray}
  f(\X ; \B) = \nabla_{\X} f(\X; \B)^T \X + \nabla_{b} f(\X; \B)^T \B 
  \label{eqn:GradRep}
\end{eqnarray}
\end{prop}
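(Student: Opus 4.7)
The plan is to reduce the biased case to the unbiased case of Proposition~\ref{prop:two} by the standard lift that treats the biases as additional inputs. Concretely, I would view the map $g : \R^D \times \R^F \rightarrow \R$ defined by $g(\X, \B) := f(\X; \B)$ and argue that $g$ is positively $1$-homogeneous in its joint argument, i.e.\ $g(k\X, k\B) = k\, g(\X, \B)$ for every $k > 0$. Once that is established, differentiating this identity in $k$ and setting $k = 1$ produces exactly the desired decomposition $f(\X;\B) = \nabla_{\X} f(\X;\B)^T \X + \nabla_{b} f(\X;\B)^T \B$.

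For the homogeneity step I would induct on depth. Writing the $\ell$-th hidden activation as $h_\ell = \sigma(W_\ell h_{\ell-1} + b_\ell)$ with $h_0 = \X$, and using the fact that ReLU satisfies $\sigma(kz) = k\, \sigma(z)$ for every $k > 0$, if by induction $h_{\ell-1}(k\X, k\B) = k\, h_{\ell-1}(\X, \B)$, then
\[
h_\ell(k\X, k\B) = \sigma\bigl(W_\ell (k h_{\ell-1}) + k b_\ell\bigr) = \sigma\bigl(k (W_\ell h_{\ell-1} + b_\ell)\bigr) = k\, h_\ell(\X, \B).
\]
The base case is immediate, and the same reasoning handles the final affine readout (with any output bias absorbed into $\B$). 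This gives $g(k\X, k\B) = k\, g(\X, \B)$, extending the scaling identity used in Proposition~\ref{prop:two} to the full parameter bank.

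For the Euler-type step, I would differentiate both sides of $g(k\X, k\B) = k\, g(\X, \B)$ in $k$. By the chain rule,
\[
\nabla_{\X} g(k\X, k\B)^T \X + \nabla_{\B} g(k\X, k\B)^T \B = g(\X, \B),
\]
and evaluating at $k = 1$ yields the claim, once one identifies $\nabla_{\B} g(\X, \B)$ with $\nabla_{b} f(\X; \B)$.

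The main obstacle is that ReLU networks are only piecewise differentiable: on a measure-zero set of kinks the gradients $\nabla_{\X} f$ and $\nabla_{b} f$ do not exist in the classical sense. For $(\X, \B)$ lying in the interior of a linear region the argument above is completely rigorous, because in that region $g$ agrees with an affine function whose linear part is fixed, and homogeneity forces the affine constant to vanish. To cover the whole domain, I would either invoke continuity of $g$ together with the fact that the identity holds on a dense open set, or interpret $\nabla_{\X} f$ and $\nabla_{b} f$ as the canonical sub-gradients returned by backpropagation (with ReLU's derivative fixed at $0$ or at $1$ on the kink); the homogeneity identity $\sigma(kz) = k\, \sigma(z)$ remains valid at $z = 0$ for any such convention, so the same telescoping yields the formula everywhere.
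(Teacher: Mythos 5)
Your proposal is correct and follows essentially the same route as the paper: both reduce to the observation that a ReLU network is positively $1$-homogeneous in the joint vector of inputs and biases (the paper realizes this by appending all-ones bias inputs $\X_b$ and invoking the bias-free case, you by a direct induction on depth) and then extract the decomposition via the Euler/first-order-Taylor identity. Your treatment of the non-differentiable set is more explicit than the paper's, but the underlying argument is the same.
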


The proof for these statements is provided in the supplementary material. Here biases include both explicit bias parameters and well as implicit biases, such as running averages of batch norm layers. For practical networks, we have observed that these implicit biases are often much larger in magnitude than explicit bias parameters, and hence might be more important.

We can extend this decomposition to non-ReLU networks by considering implicit biases arising due to usage of generic non-linearities. For this, we linearize a non-linearity $y = \sigma(x)$ at a neighborhood around $x$ to obtain $y = \frac{\mathrm{d}\sigma(x)}{\mathrm{d}x} x + b_{\sigma}$. Here $b_{\sigma}$ is the implicit bias that is unaccounted for by the derivative. Note that for ReLU-like non-linearities, $b_{\sigma} = 0$. As a result, we can trivially extend the representation to arbitrary non-linearities by appending $b_{\sigma}$ to the vector $\B$ of biases. In general, \textit{any} quantity that is unaccounted for by the input-gradient is an implicit bias, and thus by definition, together they must add up to the function output, like in equation \ref{eqn:GradRep}.

Equation \ref{eqn:GradRep} is an alternate representation of the neural network output in terms of various gradient terms. We shall call $\nabla_{\X} f(\X, \B)$ as input-gradients, and $ \nabla_b f(\X, \B) \odot \B $ as the bias-gradients. Together, they constitute full-gradients. To the best our knowledge, this is the only other exact representation of neural network outputs, other than the usual feed-forward neural net representation in terms of weights and biases.

For the rest of the paper, we shall henceforth use the shorthand notation $f^b(\X)$ for $\nabla_b f(\X, \B) \odot \B$, the bias-gradient, and drop the explicit dependence on $\B$ in $f(\X, \B)$.

\subsection{Properties of Full-Gradients}

Here discuss some intuitive properties of full-gradients. We shall assume that full-gradients comprise of the pair $G = (\nabla_x f(\X), f^b(\X)) \in \R^{D + F}$. We shall also assume with no loss of generality that the network contains ReLU non-linearity without batch-norm, and that all biases are due to bias parameters.

\textbf{Weak dependence on inputs}: For a piecewise linear function $f$, it is clear that the input-gradient is locally constant in a linear region. It turns out that a similar property holds for $f^b(\X)$ as well, and a short proof of this can be found in the supplementary material. 

\textbf{Completeness}: From equation \ref{eqn:GradRep}, we see that the full-gradients exactly recover the function output $f(\X)$, satisfying completeness. 

\textbf{Saturation sensitivity}: Broadly, saturation refers to the phenomenon of zero input attribution to regions of zero function gradient. This notion is closely related to global attribution, as it requires saliency methods to look beyond input sensitivity. As an example used in prior work \cite{sundararajan2017axiomatic}, consider $f(x) = a - \mathrm{ReLU}(b-x)$, with $ a = b = 1$. At $x = 2$, even though $f(x) = 1$, the attribution to the only input is zero, which is deemed counter-intuitive. Integrated gradients \cite{sundararajan2017axiomatic} and DeepLIFT \cite{shrikumar2017learning} consider handling such saturation for saliency maps to be a central issue and introduce the concept of baseline inputs to tackle this. However, one potential issue with this is that the attribution to the input now depends on the choice of baseline for a given function. To avoid this, we here argue that is better to also provide attributions to some function parameters. In the example shown, the function $f(x)$ has two biases $(a,b)$ and the full-gradient method attributes $(1,0)$ to these biases for input $x = 2$.    

\textbf{Full Sensitivity to Function Mapping}: Adebayo \etal ~\cite{adebayo2018sanity} recently proposed sanity check criteria that every saliency map must satisfy. The first of these criteria is that a saliency map must be sensitive to randomization of the model parameters. Random parameters produce incorrect input-output mappings, which must be reflected in the saliency map. The second sanity test is that saliency maps must change if the data used to train the model have their labels randomized. A stronger criterion which generalizes both these criteria is that saliency maps must be sensitive to any change in the function mapping, induced by changing the parameters. This change of parameters can occur by either explicit randomization of parameters or training with different data. It turns out that input-gradient based methods are insensitive to some bias parameters as shown below.

\begin{example} (Bias insensitivity of input-gradient methods)

  Consider a one-hidden layer net of the form $f(x) = w_1 * \mathrm{relu}(w_0 * x + b_0) + b_1$. For this, it is easy to see that input-gradients \cite{simonyan2013deep} are insensitive to small changes in $b_0$ and arbitrarily large changes in $b_1$. This applies to all input-gradient methods such as guided backprop \cite{springenberg2014striving} and deconv \cite{zeiler2014visualizing}. Thus none of these methods satisfy the model randomization test on $f(x)$ upon randomizing $b_1$.  

\end{example}

On the other hand, full-gradients are sensitive to every parameter that affects the function mapping. In particular, by equation \ref{eqn:GradRep} we observe that given full-gradients $G$, we have $\frac{\partial G}{\partial \theta_i} = 0$ for a parameter $\theta_i$, if and only if $\frac{\partial f}{\partial \theta_i} = 0$.

\subsection{FullGrad: Full-Gradient Saliency Maps for Convolutional Nets}

\newcommand{\lyrwise}[1]{\g{#1} & \g{#1_full-gradient_0} & \g{#1_full-gradient_3} & \g{#1_full-gradient_5} & \g{#1_full-gradient_7} & \g{#1_full-gradient} }
\newcommand{\mhdr}[2]{\multicolumn{1}{p{2cm}}{\centering #1 \\ #2}}

\begin{figure*}[t]
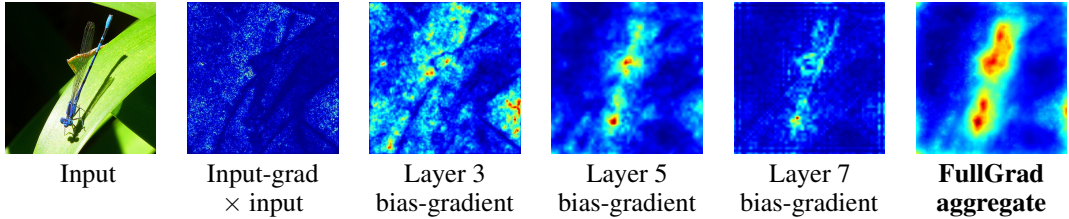

  \centering
  \begin{tabular}{cccccc}
    \hspace{-0.2cm}\lyrwise{7}\\
    Input & \mhdr{Input-grad}{$\times$ input} & \mhdr{Layer 3}{bias-gradient}  & \mhdr{Layer 5}{bias-gradient} & \mhdr{Layer 7}{bias-gradient} & \mhdr{\textbf{FullGrad}}{\textbf{aggregate}}\\ 
  \end{tabular}
  \caption{Visualization of bias-gradients at different layers of a VGG-16 pre-trained neural network. While none of the intermediate layer bias-gradients themselves demarcate the object satisfactorily, the full-gradient map achieves this by aggregating information from the input-gradient and all intermediate bias-gradients. (see Equation \ref{eqn:viz}).}
  \label{fig:layerwise}
\end{figure*}

For convolutional networks, bias-gradients have a spatial structure which is convenient to visualize. Consider a single convolutional filter $\f{z} = \f{w} * \X + \B$ where $\f{w} \in \R^{2k+1}$, $\B = [b,b....b]\in R^{D}$ and $(*)$ for simplicity refers to a convolution with appropriate padding applied so that $\f{w} * \X \in \R^D$, which is often the case with practical convolutional nets. Here the bias parameter is a single scalar $b$ repeated $D$ times due to the weight sharing nature of convolutions. For this particular filter, the bias-gradient $f^b(\X) = \nabla_{\f{z}} f(\X) \odot \B \in \R^D$ is shaped like the input $\X \in \R^D$, and hence can be visualized like the input. Further, the locally connected nature of convolutions imply that each co-ordinate $f^b(\X)_i$ is a function of only $\X[i-k, i+k]$, thus capturing the importance of a group of input co-ordinates centered at $i$. This is easily ensured for practical convolutional networks (e.g.: VGG, ResNet, DenseNet, etc) which are often designed such that feature sizes of immediate layers match and are aligned by appropriate padding.

For such nets we can now visualize per-neuron and per-layer maps using bias-gradients. Per-neuron maps are obtained by visualizing a spatial map $\in \R^D$ for every convolutional filter. Per-layer maps are obtained by aggregating such neuron-wise maps. An example is shown in Figure \ref{fig:layerwise}. For images, we visualize these maps after performing standard post-processing steps that ensure good viewing contrast. These post-processing steps are simple re-scaling operations, often supplemented with an absolute value operation to visualize only the magnitude of importance while ignoring the sign. One can also visualize separately the positive and negative parts of the map to avoid ignoring signs. Let such post-processing operations be represented by $\psi(\cdot)$. For maps that are downscaled versions of inputs, such post-processing also includes a resizing operation, often done by standard algorithms such as cubic interpolation. 

We can also similarly visualize approximate network-wide saliency maps by aggregating such layer-wise maps. Let $c$ run across channels $c_l$ of a layer $l$ in a neural network, then the FullGrad saliency map $S_f(\X)$ is given by   
\begin{align}
S_f (\X) = \psi(\nabla_{\X} f(\X) \odot \X) + \sum_{l \in L} \sum_{c \in c_l}  \psi \left( f^b (\X)_c \right)
\label{eqn:viz}
\end{align}

Here, $\psi(\cdot)$ is the post-processing operator discussed above. For this paper, we choose $\psi(\cdot) = $ \texttt{bilinearUpsample(rescale(abs($\cdot$)))}, where \texttt{rescale($\cdot$)} linearly rescales values to lie between 0 and 1, and \texttt{bilinearUpsample($\cdot$)} upsamples the gradient maps using bilinear interpolation to have the same spatial size as the image. For a network with both convolutional and fully-connected layers, we can obtain spatial maps for only the convolutional layers and hence the effect of fully-connected layers' bias parameters are not completely accounted for. Note that omitting $\psi(\cdot)$ and performing an additional spatial aggregation in the equation above results in the exact neural net output value according to the full-gradient decomposition. Further discussion on post-processing is presented in Section \ref{sec:disc}.  

We stress here that the FullGrad saliency map described here is approximate, in the sense that the full representation is in fact $G = (\nabla_x f(\X), f^b(\X)) \in \R^{D + F}$, and our network-wide saliency map merely attempts to capture information from multiple maps into a single visually coherent one. This saliency map has the disadvantage that all saliency maps have, i.e. they cannot satisfy both completeness and weak dependence at the same time, and changing the aggregation method (such as removing $\odot \X$ in equation \ref{eqn:viz}, or changing $\psi(\cdot)$) can help us satisfy one property or the other. Experimentally we find that aggregating maps as per equation \ref{eqn:viz} produces the sharpest maps, as it enables neuron-wise maps to vote independently on the importance of each spatial location.

\section{Experiments}\label{sec:experiments}

To show the effectiveness of FullGrad, we perform two quantitative experiments. First, we use a pixel perturbation procedure to evaluate saliency maps on the Imagenet 2012 dataset. Second, we use the remove and retrain procedure \cite{hooker2018evaluating} to evaluate saliency maps on the CIFAR100 dataset. 

\subsection{Pixel perturbation} \label{sec: pixperturb}

Popular methods to benchmark saliency algorithms are variations of the following procedure: remove \textit{k} most salient pixels and check variation in function value. The intuition is that good saliency algorithms identify pixels that are important to classification and hence cause higher function output variation. Benchmarks with this broad strategy are employed in \cite{samek2016evaluating, ancona2018towards}. However, this is not a perfect benchmark because replacing image pixels with black pixels can cause high-frequency edge artifacts to appear which may cause output variation. When we employed this strategy for a VGG-16 network trained on Imagenet, we find that several saliency methods have similar output variation to random pixel removal. This effect is also present in large scale experiments by \cite{samek2016evaluating, ancona2018towards}. This occurs because random pixel removal creates a large number of disparate artifacts that easily confuse the model. As a result, it is difficult to distinguish methods which create unnecessary artifacts from those that perform reasonable attributions. To counter this effect, we slightly modify this procedure and propose to remove the \textit{k} least salient pixels rather than the most salient ones. In this variant, methods that cause the least change in function output better identify unimportant regions in the image. We argue that this benchmark is better as it partially decouples the effects of artifacts from that of removing salient pixels. 

Specifically, our procedure is as follows: for a given value of \textit{k}, we replace the \textit{k} image pixels corresponding to \textit{k} least saliency values with black pixels. We measure the neural network function output for the most confident class, before and after perturbation, and plot the absolute value of the fractional difference. We use our pixel perturbation test to evaluate full-gradient saliency maps on the Imagenet 2012 validation dataset, using a VGG-16 model with batch normalization. We compare with gradCAM \cite{selvaraju2017grad}, input-gradients \cite{simonyan2013deep}, smooth-grad \cite{smilkov2017smoothgrad} and integrated gradients \cite{sundararajan2017axiomatic}. For this test, we also measure the effect of random pixel removal as a baseline to estimate the effect of artifact creation. We observe that FullGrad causes the least change in output value, and are hence able to better estimate which pixels are unimportant.

\begin{figure}
\center
\begin{subfigure}[t]{0.45\textwidth}
  \centering
 \hspace*{-1cm}\includegraphics[width = 7cm, trim={0.4cm 0 1.5cm 0}, clip]{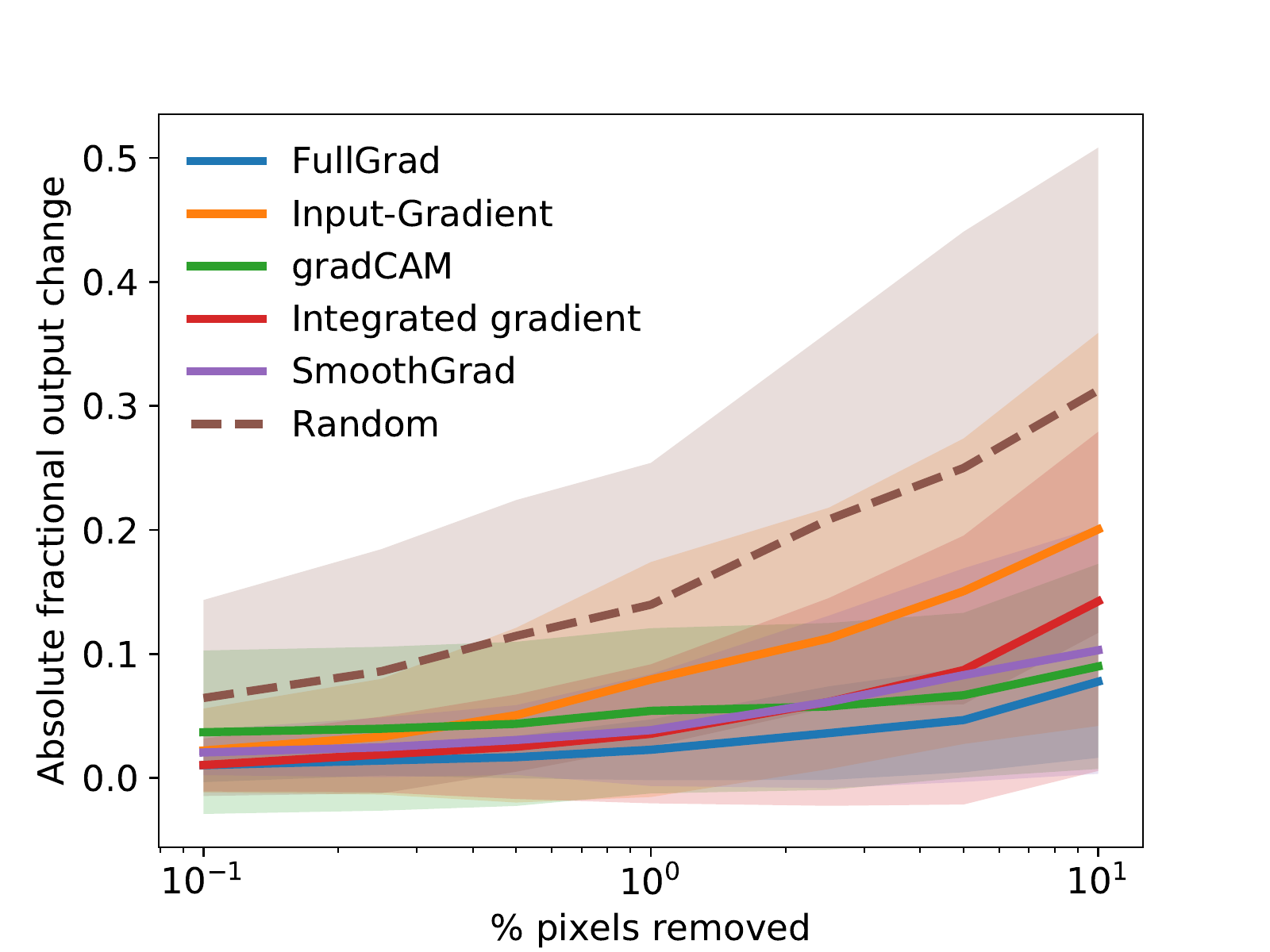}
 \caption{} \label{fig:sensitivity}
\end{subfigure}
\begin{subfigure}[t]{0.45\textwidth}
  \centering
 \includegraphics[width = 7cm, trim={0.5cm 0 1.5cm 0}, clip]{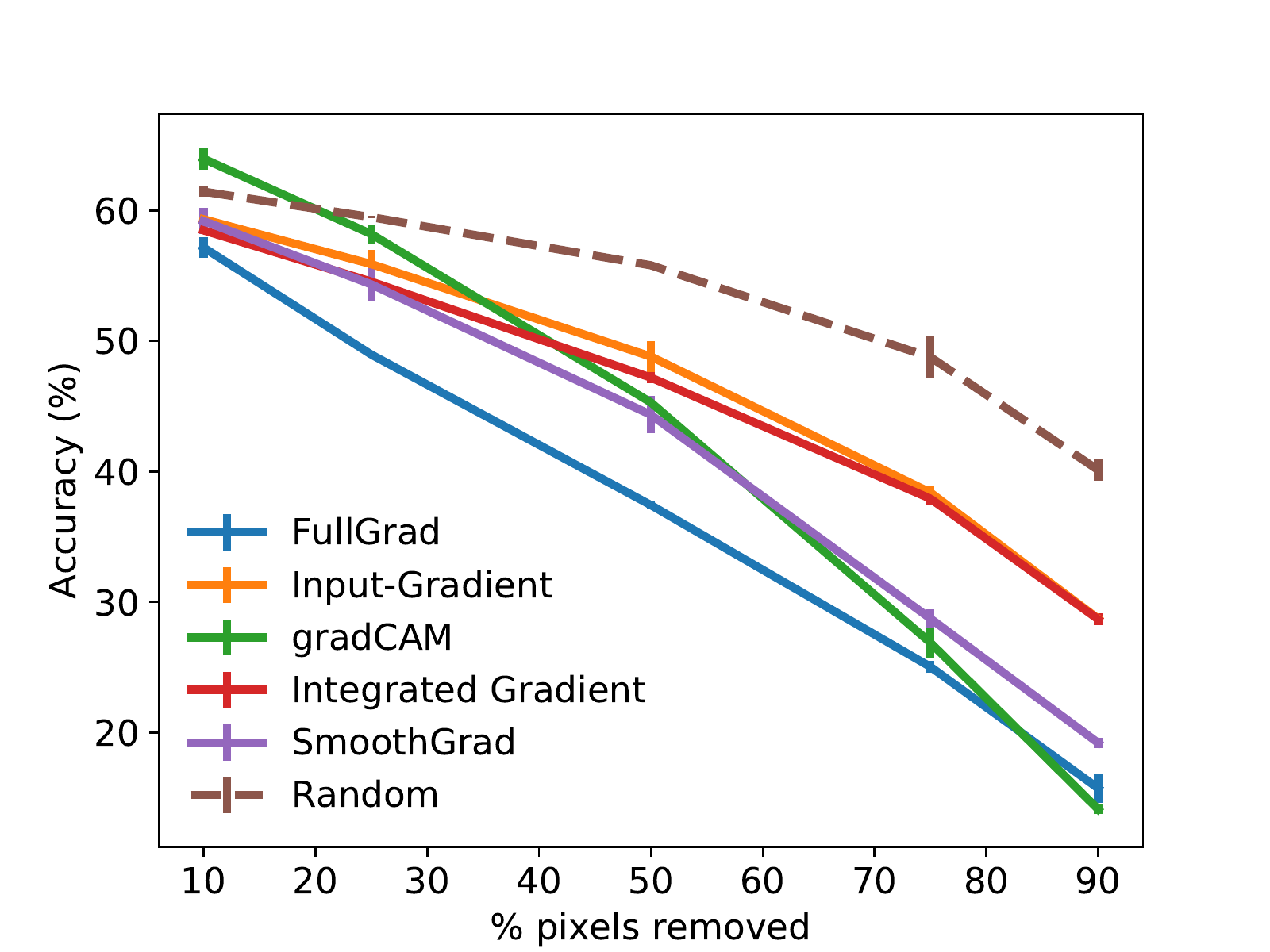}
 \caption{} \label{fig:roar}
\end{subfigure}

\caption{Quantitative results on saliency maps. \textbf{(a)} Pixel perturbation benchmark (see Section \ref{sec: pixperturb}) on Imagenet 2012 validation set where we remove $k \%$ least salient pixels and measure absolute value of fractional output change. The \textbf{lower} the curve, the better. \textbf{(b)} Remove and retrain benchmark (see Section \ref{sec: roar}) on CIFAR100 dataset done by removing $k \%$ most salient pixels, retraining a classifier and measuring accuracy.  The \textbf{lower} the accuracy, the better. Results are averaged across three runs. Note that the scales of standard deviation are different for both graphs.
}

\end{figure}

\subsection{Remove and Retrain} \label{sec: roar}
RemOve And Retrain (ROAR) \cite{hooker2018evaluating} is another approximate benchmark to evaluate how well saliency methods explain model behavior. The test is as follows: \textit{remove} the top-$k$ pixels of an image identified by the saliency map for the entire dataset, and \textit{retrain} a classifier on this modified dataset. If a saliency algorithm indeed correctly identifies the most crucial pixels, then the retrained classifier must have a lower accuracy than the original. Thus an ideal saliency algorithm is one that is able to reduce the accuracy the most upon retraining. Retraining compensates for presence of deletion artifacts caused by removing top-$k$ pixels, which could otherwise mislead the model. This is also not a perfect benchmark, as the retrained model now has additional cues such as the positions of missing pixels, and other visible cues which it had previously ignored. In contrast to the pixel perturbation test which places emphasis on identifying unimportant regions, this test rewards methods that correctly identify important pixels in the image. 

We use ROAR to evaluate full-gradient saliency maps on the CIFAR100 dataset, using a 9-layer VGG model. We compare with gradCAM \cite{selvaraju2017grad}, input-gradients \cite{simonyan2013deep}, integrated gradients \cite{sundararajan2017axiomatic} and a smooth grad variant called smooth grad squared \cite{smilkov2017smoothgrad, hooker2018evaluating}, which was found to perform among the best on this benchmark. We see that FullGrad is indeed able to decrease the accuracy the most when compared to the alternatives, indicating that they correctly identify important pixels in the image.

\subsection{Visual Inspection}
We perform qualitative visual evaluation for FullGrad, along with four baselines: input-gradients \cite{simonyan2013deep}, integrated gradients \cite{sundararajan2017axiomatic}, smooth grad \cite{smilkov2017smoothgrad} and grad-CAM \cite{selvaraju2017grad}. We see that the first three maps are based on input-gradients alone, and tend to highlight object boundaries more than their interior. Grad-CAM, on the other hand, highlights broad regions of the input without demarcating clear object boundaries. FullGrad combine advantages of both -- highlighted regions are confined to object boundaries while highlighting its interior at the same time. This is not surprising as FullGrad includes information both about input-gradients, and also about intermediate-layer gradients like grad-CAM. For input-gradient, integrated gradients and smooth-grad, we do not super-impose the saliency map on the image, as it reduces visual clarity. More comprehensive results without superimposed images for gradCAM and FullGrad are present in the supplementary material.  

\newcommand{\figuretable}[1]{\hspace*{-0.25cm}\g{#1} & \g{#1_grad} &  \g{#1_integratedgrad} & \g{#1_smoothgradsq} & \g{#1_gradCAM} &\g{#1_full-gradient}}

\begin{figure*}[t]
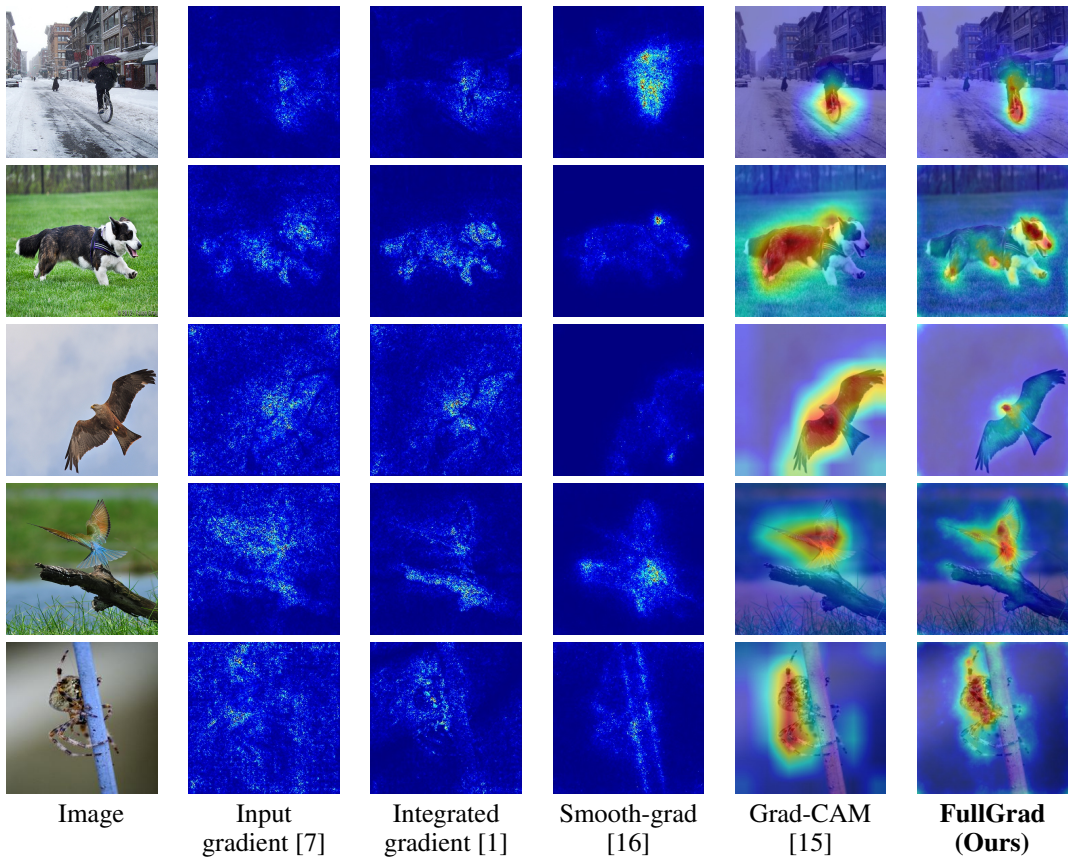

  \centering

  \begin{tabular}{cccccc}
     %\toprule
    %\midrule
    \figuretable{50}\\
    \figuretable{18}\\
    \figuretable{20}\\
    \figuretable{25}\\
    \figuretable{40}\\

    Image & \mhdr{Input}{gradient \cite{simonyan2013deep}} & \mhdr{Integrated}{gradient \cite{sundararajan2017axiomatic}} & \mhdr{Smooth-grad}{\cite{smilkov2017smoothgrad}} & \mhdr{Grad-CAM}{\cite{selvaraju2017grad}} & \mhdr{\textbf{FullGrad}}{\textbf{(Ours)}}\\ 
  \end{tabular}
  \caption{Comparison of different neural network saliency methods. Integrated-gradients~\citep{sundararajan2017axiomatic} and smooth-grad~\citep{smilkov2017smoothgrad} produce noisy object boundaries, while grad-CAM~\citep{selvaraju2017grad} indicates important regions without adhering to boundaries. FullGrad combine both desirable attributes by highlighting salient regions while being tightly confined within objects. For more results, please see supplementary material.}
  \label{fig:grad_figures}
\end{figure*}

\section{How to Choose $\psi(\cdot)$} \label{sec:disc}
In this section, we shall discuss the trade-offs that arise with particular choices of  the post-processing function $\psi(\cdot)$, which is central to the reduction from full-gradients to FullGrad. Note that by Proposition \ref{prop: one}, any post-processing function cannot satisfy all properties we would like as the resulting representation would still be saliency-based. This implies that any particular choice of post-processing would prioritize satisfying some properties over others. 

For example, the post-processing function used in this paper is suited to perform well with the commonly used evaluation metrics of pixel perturbation and ROAR for image data. These metrics emphasize highlighting important regions, and thus the magnitude of saliency seems to be more important than the sign. However there are other metrics where this form of post-processing does not perform well. One example is the digit-flipping experiment \cite{shrikumar2017learning}, where an example task is to turn images of the MNIST digit "8" into those of the digit "3" by removing pixels which provide positive evidence of "8"  and negative evidence for "3". This task emphasizes signed saliency maps, and hence the proposed FullGrad post-processing does not work well here. Having said that, we found that a minimal form of post-processing, with $\psi_m(\cdot) = $ \texttt{bilinearUpsample($\cdot$)} performed much better on this task. However, this post-processing resulted in a drop in performance on the primary metrics of pixel perturbation and ROAR. Apart from this, we also found that pixel perturbation experiments worked much better on MNIST with $\psi_{mnist}(\cdot) = $ \texttt{bilinearUpsample(abs($\cdot$))}, which was not the case for Imagenet / CIFAR100. Thus it seems that the post-processing method to use may depend both on the metric and the dataset under consideration. Full details of these experiments are presented in the supplementary material. 

We thus provide the following \textbf{recommendation to practitioners}: choose the post-processing function based on the evaluation metrics that are most relevant to the application and datasets considered. For most computer vision applications, we believe that the proposed FullGrad post-processing may be sufficient. However, this might not hold for all domains and it might be important to define good evaluation metrics for each case in consultation with domain experts to ascertain the faithfulness of saliency methods to the underlying neural net functions. These issues arise because saliency maps are approximate representations of neural net functionality as shown in Proposition \ref{prop: one}, and the numerical quantities in the full-gradient representation (equation \ref{eqn:GradRep}) could be visualized in alternate ways.

\section{Conclusions and Future Work}
In this paper, we proposed a novel technique dubbed FullGrad to visualize the function mapping learnt by neural networks. This is done by providing attributions to both the inputs and the neurons of intermediate layers. Input attributions code for sensitivity to individual input features, while neuron attributions account for interactions between the input features. Individually, they satisfy \textit{weak dependence}, a weak notion for local attribution. Together, they satisfy \textit{completeness}, a desirable property for global attribution. 

The inability of saliency methods to satisfy multiple intuitive properties both in theory and practice, has important implications for interpretability. First, it shows that saliency methods are too limiting and that we may need more expressive schemes that allow satisfying multiple such properties simultaneously. Second, it may be the case that all interpretability methods have such trade-offs, in which case we must specify what these trade-offs are in advance for each such method for the benefit of domain experts. Third, it may also be the case that multiple properties might be mathematically irreconcilable, which implies that interpretability may be achievable only in a narrow and specific sense.   

Another point of contention with saliency maps is the lack of unambiguous evaluation metrics. This is tautological; if an unambiguous metric indeed existed, the optimal strategy would involve directly optimizing over that metric rather than use saliency maps. One possible avenue for future work may be to define such clear metrics and build models that are trained to satisfy them, thus being interpretable by design. 

\section*{Acknowledgements}
We would like to thank Anonymous Reviewer \#1 for providing constructive feedback during peer-review that helped highlight the importance of post-processing. 

This work was supported by the Swiss National Science Foundation under the ISUL grant FNS-30209. 

\bibliography{icml_bib}
\bibliographystyle{unsrt}

\section*{Supplementary Material}

\section{Proof of Incompatibility}

\begin{definition} (Weak dependence on inputs)
  Consider a piecewise-linear model 
  \[ f(\X) = \begin{cases} 
    \f{w}_1^T \X + b_1 & \X \in \mathcal{U}_1 \\
    ... \\
    %\f{w}_i^T \X + b_i & \X \in \mathcal{U}_i \\
    %... \\
    \f{w}_n^T \X + b_n & \X \in \mathcal{U}_n \\
  
  \end{cases}
  \]
  where all $\mathcal{U}_i$ are open connected sets. For this function, the saliency map $S(\X) = \sigma(f,\X)$ for some $\X \in \mathcal{U}_i$ is constant for all $\X \in \mathcal{U}_i$ and is a function of only the parameters $\f{w}_i, b_i$. 
  \end{definition}

\begin{definition} (Completeness)
    A saliency map $S(\X)$ is 
    \begin{itemize}
      \item complete if there exists a function $\phi$ such that $\phi(S(\X), \X) = f(\X)$.
      \item complete with a baseline $\X_0$ if there exists a function $\phi_c$ such that $\phi_c(S(\X), S_0(\X_0), \X, \X_0) = f(\X) - f(\X_0)$, where $S_0(\X_0)$ is the saliency of the baseline.
    \end{itemize} 
  
\end{definition}

\begin{prop}
For any piecewise-linear function $f$, it is impossible to obtain a saliency map $S$ that satisfies both \textit{completeness} and \textit{weak dependence on inputs}, in general
\end{prop}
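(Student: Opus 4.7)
The plan is to build the counterexample from the simplest piecewise-linear model, namely a single linear piece $f(\X) = \f{w}^T \X + b$ viewed as a piecewise-linear function with one region $\mathcal{U} = \R^D$. By weak dependence the saliency map is constant in $\X$, so $S(\X) = s(\f{w}, b) \in \R^D$, where $s$ depends only on the $D+1$ scalar parameters of the linear model. Completeness then demands a single $\phi$ (non-constant in its first argument) with $\phi(s(\f{w}, b), \X) = \f{w}^T \X + b$ for every $\X$ and every $(\f{w}, b)$.

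The heart of the argument is to show $s$ must be injective on $\R^{D+1}$: if $s(\f{w}_1, b_1) = s(\f{w}_2, b_2) = s^*$, then for every $\X$ the quantity $\phi(s^*, \X)$ must equal both $\f{w}_1^T \X + b_1$ and $\f{w}_2^T \X + b_2$, which forces $(\f{w}_1, b_1) = (\f{w}_2, b_2)$. But no injective map $\R^{D+1} \hookrightarrow \R^D$ exists under any regularity assumption that applies to saliency operators in practice (continuity via invariance of domain, smoothness, or open-mapness). This is what the proposition's "in general" clause is capturing.

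For the baseline version the same reduction stalls within a single linear piece, because $\phi(s(\f{w},b), s(\f{w},b), \X, \X_0) = \f{w}^T(\X - \X_0)$ does not involve $b$. I would therefore upgrade to a genuinely two-piece model with parameters $(\f{w}_1, b_1)$ on $\mathcal{U}_1$ and $(\f{w}_2, b_2)$ on $\mathcal{U}_2$, placing $\X \in \mathcal{U}_1$ and $\X_0 \in \mathcal{U}_2$. Baseline completeness then reads $\phi(s(\f{w}_1,b_1), s(\f{w}_2,b_2), \X, \X_0) = \f{w}_1^T \X + b_1 - \f{w}_2^T \X_0 - b_2$; freezing every quantity except $(\f{w}_2, b_2)$, the right-hand side depends nontrivially on all $D+1$ scalars of the second piece, and the same $\R^{D+1} \hookrightarrow \R^D$ obstruction returns.

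The main obstacle I anticipate is not the combinatorics of the construction but making precise the regularity hypothesis that converts the dimension count into a genuine non-existence statement: since $\R^{D+1}$ and $\R^D$ are equinumerous as sets, a purely set-theoretic argument fails. I would state this assumption explicitly at the outset and observe that every saliency construction appearing in practice (plain gradients, Taylor decompositions, integrated- or Shapley-gradient-style integrals, layerwise relevance propagation) automatically falls into one of the categories where the obstruction bites, which justifies the "in general" qualifier in the statement.
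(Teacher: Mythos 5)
Your proof takes essentially the same route as the paper's: both reduce to the observation that weak dependence makes the saliency map on a linear piece a function $\R^{D+1}\to\R^D$ of that piece's parameters, that completeness forces this function to be injective (for the baseline variant, both arguments place $\X$ and $\X_0$ in different pieces and reach the same obstruction), and that no such injection exists. The only notable difference is that you explicitly flag the regularity hypothesis needed to rule out set-theoretic injections $\R^{D+1}\hookrightarrow\R^D$, a point the paper glosses over by simply asserting that the parameter-to-saliency map is many-to-one.
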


\begin{proof}

From the definition of saliency maps, there exists a mapping from $\sigma: (f ,\X) \rightarrow S$. Let us consider the family of piecewise linear functions which are defined over the same open connected sets given by $\mathcal{U}_i$ for $i \in [1,n]$. Members of this family thus can be completely specified by the set of parameters $\theta = \{ \f{w}_i, b_i | i \in [1,n] \} \in \R^{n \times (D+1)}$ for $f$ and similarly $\theta'$ for $f'$. 

For this family, \textbf{weak dependence} implies that the restriction of the mapping $\sigma$ to the set $\mathcal{U}_i$, is denoted by $\restr{\sigma}{\mathcal{U}_i} : (\f{w}_i, b_i) \rightarrow S$. Now, since $(\f{w}_i, b_i) \in \R^{D+1}$ and $S \in \R^D$, the mapping $\restr{\sigma}{\mathcal{U}_i}$ is a many-to-one function. This implies that there exists piecewise linear functions $f$ and $f'$ within this family, with parameters $\theta_i = (\f{w}_i,b_i)$ and $\theta'_i = (\f{w'}_i,b'_i)$ respectively over $\mathcal{U}_i$ (with $\theta_i \neq \theta'_i$), which map to the same saliency map $S$.

\textbf{Part (a):} From the first definition of \textbf{completeness}, there exists a mapping $\phi: S, \X \rightarrow f(\X)$. However, for two different piecewise linear functions $f$ and $f'$ that map to the same $S$ for some input $\X \in \mathcal{U}_i$, we must have that $\phi(S, \X) = f(\X) = \f{w}_i^T \X + b_i$ for $f$ and $\phi(S, \X) = f'(\X) = \f{w'}_i^T \X + b'_i$ for $f'$. This can hold for a local neighbourhood $\mathcal{U}_i$ around $\X$ if and only if $\f{w}_i = \f{w'}_i$ and $b_i = b'_i$, which we have already assumed to be not true. 

\textbf{Part (b):} From the second definition of \textbf{completeness}, there exists a mapping $\phi_c: S, S_0,\X, \X_0 \rightarrow f(\X) - f(\X_0)$. 

Let the baseline input $\X_0 \in \mathcal{U}_j$. Similar to the case above, let us assume existence of functions $f$ and $f'$ with parameters $\theta_j = (\f{w}_j,b_j)$ and $\theta'_j = (\f{w'}_j,b'_j)$ respectively (with $\theta_j \neq \theta'_j$), which map to the same saliency map $S_0$. This condition is in addition to the condition already applied on $\mathcal{U}_i$.

Hence we must have that $\phi(S, S_0,\X, \X_0) = f(\X) - f(\X_0)= \f{w}_i^T \X + b_i - \f{w}_j^T \X - b_j$ for $f$ and $\phi(S, S_0,\X, \X_0) = f'(\X) - f'(\X_0) = \f{w'}_i^T \X + b'_i -  \f{w'}_j^T \X - b'_j $ for $f'$. This can hold for local neighbourhoods around $\X$ and $\X_0$ if and only if $\f{w}_i = \f{w'}_i$,  $\f{w}_j = \f{w'}_j$ and $b_i - b'_i = b_j - b'_j$. The final condition does not hold in general, hence completeness is not satisfied.
\end{proof}

\textbf{When does $b_i - b'_i = b_j - b'_j$ hold?}
\begin{itemize}
\item For piecewise linear models without bias terms (e.g.: ReLU neural networks with no biases), the terms $b_i, b'_i, b_j, b'_j$ are all zero, and hence this condition holds for such networks.

\item For linear models, (as opposed to piecewise linear models), or when both $\X$ and $\X_0$ lie on the same linear piece, then $b_i = b_j$, which automatically implies that the condition holds.
\end{itemize}
However these are corner cases and the condition on the biases does not hold in general.

\section{Full-gradient Proofs}

\begin{prop}
  Let $f$ be a ReLU neural network without bias units, then $f(\X) = \nabla_{\X} f(\X)^T \X$.
  \label{prop:one}
\end{prop}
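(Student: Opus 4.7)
The plan is to exploit positive homogeneity of degree one. First I would establish that any ReLU network without bias units satisfies $f(k\X) = k f(\X)$ for every $k > 0$. This follows by a straightforward induction on the depth of the network: linear maps $\X \mapsto W\X$ are positively homogeneous of degree one by linearity, the ReLU non-linearity satisfies $\max(0, ky) = k \max(0, y)$ for $k > 0$, and the composition of positively $1$-homogeneous maps is again positively $1$-homogeneous. Since a ReLU net without biases is built entirely from these two operations, the homogeneity claim follows at every layer, in particular at the output.

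Next I would invoke Euler's homogeneous function theorem. Differentiating the identity $f(k\X) = k f(\X)$ with respect to $k$ using the chain rule on the left side gives $\nabla_{\X} f(k\X)^T \X$, while the right side differentiates to $f(\X)$. Evaluating at $k = 1$ yields $\nabla_{\X} f(\X)^T \X = f(\X)$, which is the desired identity.

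The main obstacle is that a ReLU network is only piecewise-linear and hence not differentiable on a set of measure zero (the boundaries between linear regions). I would handle this by restricting attention to inputs $\X$ that lie in the interior of a linear region $\mathcal{U}_i$, on which $f$ agrees with a linear function $\f{w}_i^T \X$ (no bias, since the network has none). For such $\X$, the gradient $\nabla_{\X} f(\X) = \f{w}_i$ is well-defined, and furthermore the homogeneity identity gives $\f{w}_i^T (k\X) = k \f{w}_i^T \X$ trivially so the derivative argument above is rigorous. The equation $\f{w}_i^T \X = f(\X)$ then holds on each such region, and the set of such $\X$ has full measure, so the statement holds wherever the gradient is defined. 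The only remaining issue is defining $\nabla_{\X} f$ at boundary points, where one can pick any subgradient and the identity continues to hold by continuity of $f$ and of the linear expression.
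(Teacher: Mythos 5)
Your proof is correct and takes essentially the same route as the paper's: establish positive homogeneity $f(k\X) = k f(\X)$ from the ReLU property, then differentiate the identity at $k=1$ (the paper phrases this as a first-order Taylor expansion of $f((1+\epsilon)\X)$, which is the same Euler-theorem computation). Your extra care about non-differentiability on the boundaries of linear regions goes beyond what the paper writes, but does not change the argument.
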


\begin{proof}
  For ReLU nets without bias, we have $ f(k \X) = k f(\X)$ for $k \geq 0$. This is a consequence of the positive homogeneity property of ReLU (i.e; $\mathrm{max}(0, k \X) = k \mathrm{max}(0, \X)$)

  Now let $\epsilon \in \R^+$ be infinitesimally small. We can now use first-order Taylor series to write the following. $f((1 + \epsilon) \X) = f(\X) + \epsilon f(\X) = f(\X) + \epsilon \X^T \nabla_{\X} f(\X)$.
\end{proof}

\begin{prop}
  Let $f$ be a ReLU neural network with bias-parameters $\B \in \R^F$, then
  \begin{eqnarray}
    f(\X ; \B) &=& \nabla_{\X} f(\X; \B)^T \X + \sum _{i \in [1,F]} \left( \nabla_{b} f(\X; \B) \odot \B \right)_i \nonumber \\
      &=& \nabla_{\X} f(\X; \B)^T \X + \nabla_{b} f(\X; \B)^T \B 
    \label{eqn:GradRep}
  \end{eqnarray}
  \end{prop}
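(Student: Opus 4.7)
The plan is to reduce this to the preceding bias-free proposition by the standard trick of promoting every bias parameter to an auxiliary input. In any affine operation of the form $\f{z} = W\X + \B'$, we can reinterpret $\B'$ as an extra input passed through identity weights, i.e.\ $\f{z} = W\X + I\B'$. Applying this reinterpretation at every neuron that carries a bias converts the bias-containing ReLU network $f(\,\cdot\,;\B)$ into a strictly larger ReLU network $\tilde f : \R^{D+F} \to \R$ with \emph{no} bias parameters, whose input is the concatenation $[\X;\B]$, and which satisfies $\tilde f([\X;\B]) = f(\X;\B)$ for every $\X$ and $\B$.

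Second, I would apply the bias-free proposition to $\tilde f$ at the point $[\X;\B]$, obtaining
\[
\tilde f([\X;\B]) \;=\; \nabla_{[\X;\B]} \tilde f([\X;\B])^T [\X;\B] \;=\; \nabla_{\X}\tilde f([\X;\B])^T \X \,+\, \nabla_{\B}\tilde f([\X;\B])^T \B.
\]

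Third, I would identify the two gradient blocks with their counterparts in $f$. Because $\X$ enters $\tilde f$ through exactly the same weight matrices and nonlinearities as in $f$, the chain rule gives $\nabla_{\X}\tilde f([\X;\B]) = \nabla_{\X} f(\X;\B)$. Symmetrically, each $b_i$ is added to the same preactivation in both formulations (as a parameter of $f$ or as an identity-weighted input of $\tilde f$), so the entire downstream subgraph is unchanged, and consequently $\nabla_{\B}\tilde f([\X;\B]) = \nabla_{b} f(\X;\B)$. Substituting these identifications into the display above yields the claimed identity.

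The main obstacle is the last step: it is pure bookkeeping, but one has to argue carefully that the partial derivative with respect to a ``parameter'' coincides with the partial derivative with respect to the corresponding ``input'' after the reinterpretation. This boils down to observing that, below the neuron where $b_i$ is injected, the computational graph is bit-for-bit identical in $f$ and $\tilde f$, so the chain-rule path products defining $\partial f / \partial b_i$ and $\partial \tilde f / \partial b_i$ agree termwise. Once this verification is made, the reduction to the bias-free proposition is immediate and the equation $f(\X;\B) = \nabla_{\X} f(\X;\B)^T \X + \nabla_{b} f(\X;\B)^T \B$ follows.
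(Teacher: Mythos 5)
Your proposal is correct and follows essentially the same route as the paper: the paper also reduces to the bias-free proposition by absorbing biases as auxiliary inputs (it introduces constant all-ones inputs $\X_b = \1^F$ multiplied by $\B$, whereas you feed $\B$ itself through identity weights, a trivially equivalent reparametrization) and then uses the chain rule to identify the gradient with respect to the new inputs with $\nabla_b f$. The bookkeeping step you flag is handled in the paper by exactly the observation you describe, namely that both derivatives factor through the same pre-activations $\Z$.
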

\begin{proof}

We introduce bias inputs $\X_b = \1^F$, an all-ones vector, which are multiplied with bias-parameters $\B$. Now $f(\X, \X_b)$ is a linear function with inputs $(\X, \X_b)$. Proposition applies here.

\begin{eqnarray}
f(\X, \X_b) &=& \nabla_{\X} f(\X, \X_b)^T \X + \nabla_{\X_b} f(\X, \X_b)^T \X_b\\
 &=&  \nabla_{\X} f(\X, \X_b)^T \X + \sum_i \left( \nabla_{\X_b} f(\X, \X_b) \right)_i \nonumber
\end{eqnarray}

Using chain rule for ReLU networks, we have $\nabla_{\X_b} f(\X, \X_b ; \B , \Z) = \nabla_z f(\X, \X_b ; \B, \Z) \odot \B$, where $\Z \in R^F$ consists of all intermediate pre-activations. Again invoking chain rule, we have $\nabla_z f(\X, \X_b ; \B, \Z) = \nabla_b f(\X, \X_b ; \B, \Z)$

\end{proof}

\begin{observation}
For a piecewise linear neural network, $f^b(\X)$ is locally constant in each linear region.
\end{observation}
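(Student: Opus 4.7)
The plan is to exploit the fact that within a linear region of a piecewise linear network, the ReLU activation pattern is fixed, so every step of backpropagation becomes independent of $\X$. Formally, let $\pi(\X) \in \{0,1\}^N$ denote the activation pattern at $\X$, where $N$ is the total number of ReLU units and $\pi_k(\X) = \1[z_k(\X) > 0]$ for the $k$-th pre-activation $z_k$. A linear region $\mathcal{U}$ is, by definition, a maximal connected open set on which $\pi$ is constant, say $\pi \equiv \pi^\star$.

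First, I would apply the chain rule to $\nabla_{b_i} f(\X)$ for an arbitrary $\X \in \mathcal{U}$. Backpropagation through the network encounters each ReLU exactly once, contributing a factor equal to its derivative, which is either $0$ or $1$ depending on whether the unit is active. Because $\pi \equiv \pi^\star$ throughout $\mathcal{U}$, these factors are identical for every $\X \in \mathcal{U}$. The remaining ingredients of the chain-rule product are the fixed weight matrices, which likewise do not depend on $\X$. Consequently $\nabla_{b_i} f(\X)$ equals a fixed expression in the weights and in $\pi^\star$, with no residual dependence on $\X$.

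Second, since multiplying a vector-valued function that is constant on $\mathcal{U}$ by the fixed vector $\B$ preserves that property, $f^b(\X) = \nabla_b f(\X) \odot \B$ is locally constant on every linear region, which is exactly what the statement asserts.

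The only subtlety, more bookkeeping than obstacle, is the treatment of implicit biases (running statistics of batch-norm layers, or the $b_\sigma$ terms from linearizing generic non-linearities). Once these are appended to $\B$ as in the construction preceding Equation \ref{eqn:GradRep}, they remain constants in $\X$ and the chain-rule argument above applies verbatim. A related minor point is that $\nabla_b f$ is formally undefined on the measure-zero boundaries where some $z_k = 0$, but the statement concerns the open linear regions themselves, so this never arises.
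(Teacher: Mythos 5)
Your proof is correct and rests on exactly the same idea as the paper's: within a linear region the ReLU activation pattern is fixed, so every ReLU derivative appearing in the chain-rule expression for $\nabla_b f$ is constant, and the remaining factors are fixed weight matrices, making $f^b(\X) = \nabla_b f(\X) \odot \B$ locally constant. The only difference is presentational — the paper works out the one-hidden-layer case explicitly and appeals to recursion for deeper networks, whereas you state the general chain-rule argument directly (and tidy up the implicit-bias and boundary caveats), which is if anything slightly more complete.
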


\begin{proof}
Consider a one-hidden layer ReLU net of the form $f(x) = w_1 * \mathrm{relu}(w_0 * x + b_0) + b_1$, where $f(\X) \in \R$. Let $\rho(z) = \frac{d\mathrm{relu}(z)}{dz}$ be the derivative of the output of relu w.r.t. its inputs. Then the gradients w.r.t. $b_0$ can be written as $\frac{d f}{ d b_0} = w_1 * \rho(w_0 * x + b_0)$. For each linear region, the derivatives of the relu non-linearities w.r.t. their inputs are constant. Thus for a one-hidden layer net, the bias-gradients are constant in each linear region. The same can be recursively applied for deeper networks.
\end{proof}

\section{Experiments to Illustrate Post-Processing Trade-offs}
 In this section, we shall describe the experiments performed on the MNIST dataset. First, we perform the digit flipping experiment \cite{shrikumar2017learning} to test class sensitivity of our method. Next, we perform pixel perturbation experiment as outlined in Section 5.1 of the main paper.
 
 \subsection{Digit Flipping}
 Broadly, the task here is to turn images of the MNIST digit "8" into those of the digit "3" by removing pixels which provide positive evidence of "8"  and negative evidence for "3". We perform experiments with a setting similar to the DeepLIFT paper \cite{shrikumar2017learning}, except that we use a VGG-like architecture.  Here, FullGrad (no abs) refers to using $\psi_m(\cdot) = $ \texttt{bilinearUpsample($\cdot$)} and the FullGrad method refers to using  $\psi_m(\cdot) = $ \texttt{bilinearUpsample(abs($\cdot$))}. From the results in Table \ref{tab:digitflipping}, we see that FullGrad without absolute value performs better in the digit flipping task when compared to FullGrad and all other methods.

 \begin{table}[h] 

  \begin{tabular}{c|cccccc}
    \textbf{Method} & Random & Gradient & IntegratedGrad & FullGrad & FullGrad (no abs) \\ \midrule
    \textbf{$\Delta$ log-odds} & $1.41 \pm 8.21$ & $11.92 \pm 17.99$ & $10.81 \pm 20.11$ & $8.26 \pm 21.44$ & $\mathbf{12.93 \pm 18.20}$
    \\
  \end{tabular}

  \vspace{0.2cm}

 \caption{Results on the digit flipping task ($8 \rightarrow 3$). We see that FullGrad (minimal) outperforms others including FullGrad. Larger numbers are better.}
 \label{tab:digitflipping}
 \end{table}

 \subsection{Pixel Perturbation}
 We perform the pixel perturbation task on MNIST. This involves removing the least salient pixels as predicted by a saliency map method and measuring the fractional change in output. The smaller the fractional output change, the better is the saliency method. From Table \ref{tab:pixperturb}, we observe that Integrated gradients perform best overall for this dataset. We hypothesize that the binary nature of MNIST data (i.e.; pixels are either black or white, and "removed" pixels are black) may be well-suited to Integrated gradients, which is not the case for our Imagenet experiments. However, more interestingly, we observe that regular FullGrad outperforms the variant without absolute values.
 
 Thus while for digit flipping it seems that FullGrad (no abs) is the best, followed by gradients and Integrated gradients, for pixel perturbation it seems that Integrated Gradients is the best followed by FullGrad and FullGrad (no abs). Thus it seems that any single saliency or post-processing method is never consistently better than the others, which might point to either the deficiency of the methods themselves, or the complementary nature of the metrics. 

  \begin{table}

  \begin{tabular}{c|cccccc}
    \textbf{Method} & Random & Gradient & IntegratedGrad & FullGrad & FullGrad (no abs) \\ \midrule
    \textbf{RF = 0.5} & $0.82 \pm 0.28$ & $0.29 \pm 0.22$ & $\mathbf{0 \pm 0}$ & $0.06 \pm 0.13$ & $0.19 \pm 0.19$ \\
    \textbf{RF = 0.7} & $0.98 \pm 0.34$ & $0.52 \pm 0.27$ & $\mathbf{0.004 \pm 0.06}$ & $0.08 \pm 0.11$ & $0.34 \pm 0.23$ \\
    \textbf{RF = 0.9} & $1.12 \pm 0.42$ & $0.88 \pm 0.34$ & $\mathbf{0.44 \pm 0.33}$ & $0.55 \pm 0.30$ & $0.63 \pm 0.27$ 

  \end{tabular}
  
  \vspace{0.2cm}

 \caption{Results on the pixel perturbation task on MNIST. In this case, FullGrad performs better than FullGrad (minimal). The overall best performer here is Integrated gradients. Smaller numbers are better. } \label{tab:pixperturb}
 \end{table}

\section{Saliency Results}
\begin{figure*}[h]
  \centering

  \begin{tabular}{cccccc}
     %\toprule

    %\midrule
    \figuretablesupp{1}\\
    \figuretablesupp{2}\\
    \figuretablesupp{3}\\
    \figuretablesupp{4}\\
    \figuretablesupp{5}\\
    \figuretablesupp{6}\\
    \figuretablesupp{7}\\
    \figuretablesupp{8}\\
    \figuretablesupp{12}\\

    Image & \mhdr{Input}{gradient \cite{simonyan2013deep}} & \mhdr{Integrated}{gradient \cite{sundararajan2017axiomatic}} & \mhdr{Smooth-grad}{\cite{smilkov2017smoothgrad}} & \mhdr{Grad-CAM}{\cite{selvaraju2017grad}} & \mhdr{\textbf{FullGrad}}{\textbf{(Ours)}}\\ 
  \end{tabular}
  \caption{Comparison of different neural network saliency methods.}
  \label{fig:grad_figures}
\end{figure*}

\begin{figure*}[t]
  \centering

  \begin{tabular}{cccccc}
     %\toprule
    %\midrule
    \figuretablesupp{14}\\
    \figuretablesupp{16}\\
    \figuretablesupp{21}\\
    \figuretablesupp{24}\\
    \figuretablesupp{28}\\
    \figuretablesupp{30}\\
    \figuretablesupp{32}\\
    \figuretablesupp{35}\\
    \figuretablesupp{45}\\

    Image & \mhdr{Input}{gradient \cite{simonyan2013deep}} & \mhdr{Integrated}{gradient \cite{sundararajan2017axiomatic}} & \mhdr{Smooth-grad}{\cite{smilkov2017smoothgrad}} & \mhdr{Grad-CAM}{\cite{selvaraju2017grad}} & \mhdr{\textbf{FullGrad}}{\textbf{(Ours)}}\\ 
  \end{tabular}
  \caption{Comparison of different neural network saliency methods.}
  \label{fig:grad_figures}
\end{figure*}

\end{document}